\title[Learning reversible symplectic dynamics]{
Learning reversible symplectic dynamics}
\newcommand{\R}[0]{\mathds{R}} 
\newcommand{\M}[0]{\mathcal{M}} 
\newcommand{\Ha}[0]{\mathcal{H}} 
\newcommand{\norm}[1]{\left\lVert#1\right\rVert}
\newcommand{\argmin}[1]{\underset{#1}{\operatorname{arg}\,\operatorname{min}}\;}
\newtheorem{thm}{Theorem}[section]
\newtheorem{dfn}{Definition}[section]
\newcommand{\T}[0]{\mathcal{T}} 
\author{
\Name{Riccardo Valperga} \Email{r.valperga@uva.nl}\\
\addr Informatics Institute, University of Amsterdam, The Netherlands\\
\\
\Name{Kevin Webster} \Email{kevin.webster@imperial.ac.uk}\\
\Name{Dmitry Turaev} \Email{d.turaev@imperial.ac.uk}\\
\Name{Victoria Klein} \Email{victoria.klein18@imperial.ac.uk}\\
\Name{Jeroen S. W. Lamb} \Email{jsw.lamb@imperial.ac.uk}\\
\addr Department of Mathematics, Imperial College London, United Kingdom}
\begin{document}
\maketitle
\begin{abstract}%
Time-reversal symmetry arises naturally as a structural property in many dynamical systems of interest. While the importance of hard-wiring symmetry is increasingly recognized in machine learning, to date this has eluded time-reversibility. In this paper we propose a new neural network architecture for learning time-reversible dynamical systems from data. We focus in particular on an adaptation to symplectic systems, because of their importance in physics-informed learning. 
\end{abstract}
\begin{keywords}%
Physics-informed machine learning, time-reversal symmetry, symplectic neural networks, dynamical systems
\end{keywords}

\section{Introduction}\label{sec:intro}
Neural networks are universal approximators: in principle, any sufficiently well-behaved function can be approximated, with arbitrary accuracy, using a neural network \citep{hornik1989multilayer}. In practice, to achieve good approximations we need large datasets and for physical systems data acquisition can be costly. On the other hand, when dealing with physical systems, we very often use models which carry particular mathematical structures. Such structure can be recognised a priori, to guide - and improve the quality of - learning. In this paper, we address the problem of learning time-reversible and symplectic dynamics, motivated by their widespread occurrence in physics. Time-reversibility and symplecticity are fundamental properties that the corresponding learned dynamical system should have \textit{exactly}, rather than approximately.\footnote{In practical terms, this means that any deviation in the structure should be limited by the numerical accuracy of the computational device rather than the (larger) error of the optimisation algorithm.}

Time-reversing symmetry arises naturally as a structural property of many physical systems of interest \citep{lamb1998time}. An elementary example is given by the Newtonian-type second order differential equations
$\ddot{\mathbf{x}} = \mathbf{F}(\mathbf{x},\dot{\mathbf{x}})$: when the forces $\mathbf{F}$ do not depend on the velocities $\dot{\mathbf{x}}$ (or are even functions of the
velocities), the equations do not change after the reversal of time $t \longrightarrow -t$. This implies that if 
$(\mathbf{x}(t),\dot{\mathbf{x}}(t))$ is a solution, then $(\mathbf{x}(-t),-\dot{\mathbf{x}}(-t))$ is also a solution.

In general, a smooth system 
\begin{equation}\label{AutonomousSys}
\dot{\mathbf{x}} = \mathbf{F}(\mathbf{x}),
\end{equation}
of autonomous ordinary differential equations is said to possess a {\em reversing symmetry} if there exists an invertible smooth map $R$ such that
\begin{equation}
\frac{d R(x)}{dt} = -\mathbf{F}(R(x)).
\end{equation}
In this case, we call such system {\em $R$-reversible}. 
Similarly, a {\em discrete-time dynamical system} defined by an invertible map $\T$ is called $R$-reversible if it possesses a reversing symmetry $R$:
\begin{equation}\label{DiscreteRev}
R \circ \T = \T^{-1} \circ R.
\end{equation}
The time-$t$ flow map of an $R$-reversible system of differential equations is $R$-reversible for all $t\in\R$.

{\em Symplectic dynamics} are generated by the \textit{Hamiltonian} differential equations
\begin{equation}\label{ham1}
\dot{\mathbf{q}} = \frac{\partial \mathcal{H}}{\partial \mathbf{p}},  \qquad \dot{\mathbf{p}} = - \frac{\partial \mathcal{H}}{\partial \mathbf{q}},
\end{equation}
where $\mathcal{H}$, the Hamiltonian or energy function, is a given function of the coordinates $\mathbf{q}\in \R^n$ and their conjugate momenta $\mathbf{p}\in\R^n$. This particular class of equations is important because of one of the most basic facts of physics: every \textit{isolated} physical system has a Hamiltonian structure. Hamiltonian systems {\em preserve energy}, i.e., dynamics are constrained to the $(2n-1)$-dimensional manifold $\mathcal{H}(\mathbf{p},\mathbf{q}) = constant$. The flow maps of a Hamiltonian system are symplectic in the following sense.\footnote{For simplicity, we confine the discussion here to systems with a standard Darboux symplectic form.}

A smooth map $\mathcal{T}: \M \longrightarrow \R^{2n}$ (where $\M$ is an open subset of $\R^{2n}$) is said to be symplectic if
\begin{equation}
(d_{x} \T)^{\top} J (d_{x}\T) = J, \quad \forall x \in \M,
\end{equation}
where $d_{x}\T$ is the Jacobian matrix of $\T$ at the point $x$, and $J=\begin{bmatrix}
0 & \mathbb{I}_{n} \\
-\mathbb{I}_{n} & 0 \\
\end{bmatrix}$.

When the Hamiltonian is an even function of the momenta, i.e., it is invariant with respect to the involution
\begin{equation}\label{HamiltonianR}
R: (\mathbf{p}, \mathbf{q}) \longrightarrow (-\mathbf{p}, \mathbf{q}),
\end{equation}
system (\ref{ham1}) is $R$-reversible. The classical example is given by
mechanical systems where the Hamiltonian is the sum of potential and kinetic energies. However, not every
Hamiltonian system is time-reversible (those involving an interaction with a magnetic field are, often, not). Also not every reversible system is Hamiltonian (non-holonomic mechanics provide many examples \citep{gonchenko2020three}).

Note that the canonical reversing symmetry
$R(p,q)=(-p,q)$ is an involution (i.e., $R^2=id$) and is also {\em anti-symplectic}, i.e., it satisfies 
$R^\top J R = - J$. It is indeed natural to consider Hamiltonian 
systems with (anti-)symplectic time-reversal symmetries. In the examples
discussed in this paper, reversing symmetries are always anti-symplectic, but our results also hold for symplectic reversing symmetries. The structure defined by a combination of symplecticity
and an involutory reversing (anti-)symplectic symmetry is henceforth called {\em reversible
symplectic structure}.\footnote{Non-involutory reversing symmetries may also be considered, but they imply the existence of additional time-preserving symmetries, leading to reversible equivariant (symplectic) settings, which will not be considered in this
paper.}

The importance of geometric  structure - such as symmetry or symplecticity - has began to be recognized in the context of learning dynamical systems from observations \citep{burby2020fast, jin2020symplectic}, but - to date - this has eluded time-reversing symmetries. In this paper, we propose a new structure-preserving neural network for learning reversible dynamical systems. We focus in particular on learning reversible symplectic dynamics for their importance in physics-informed learning.

We implement our method for several examples of chaotic behavior including the paradigmatic H\'enon-Heiles system and the periodically forced pendulum. Both in qualitative and quantitative aspects, we demonstrate improved performance and accuracy of our structure-preserving network, over the existing symplectic learning algorithms that are not time-reversible.


\section{Setup}\label{PINNS}
As usual in a supervised learning setting, the objective is to learn a map $\mathcal{T}: \M \longrightarrow \M$ representing the evolution of a dynamical system, from observations: given a set $\left\{(x_{i}, y_{i} = \T(x_{i})) \right\}_{i=1}^{N} \subset \M$, the goal is to approximate $\T$ with some map $\hat{\T} \in \mathbf{H}$, where $\mathbf{H}$ a given function space which we refer to as the \textit{hypothesis space}. We aim to find $\hat{\T}$ approximating $\T$ with best fit, in the sense that
\begin{equation}
\hat{\T} = \argmin{\T^{*} \in \mathbf{H}} \frac{1}{N} \sum_{i=1}^{N}d(\T(x_{i}), \T^{*}(x_{i})),
\end{equation}
where $d$ denotes the Euclidean metric $d(x, y) =  \sqrt{\norm{x - y}^{2}}$.

In the context of physics-informed learning, there may be a priori knowledge about the structure of the dynamical system that is to be learned. In this case, the hypothesis space should possess this structure intrinsically. Natural hypothesis spaces that approximate with arbitrary accuracy are infinite dimensional. The constructive approach is to build a countable family of hypothesis subspaces $\{ \mathbf{H}_{\ell} \}_{\ell \in \mathbb{N}}$, 
with $\mathbf{H}_{\ell}\subset\mathbf{H}_{\ell+1}$ for all $\ell\in\mathbb{N}$, such that $\lim_{\ell \to\infty} \mathbf{H}_{\ell} = \mathbf{H}$ in an appropriate topology. In other words, $\cup_{\ell} \mathbf{H}_{\ell}$ universally approximates $\mathbf{H}$. When the learned function is represented by a neural network, the number $\ell$ is indicative of the number of degrees of freedom in the network. In practice, one settles for a certain $\mathbf{H}_{\ell^{*}}$ with $\ell^{*}$ large enough, so that targets are sufficiently well approximated. Key to a structure-preserving approach is that $\mathbf{H}_\ell$ has the desired structure {\em exactly} for all $\ell \in \mathbb{N}$. The objective of this paper is to propose choices of $\{ \mathbf{H}_\ell \}_{\ell \in \mathbb{N}}$ that are reversible or reversible symplectic.

Neural networks architectures with hardwired symplectic structure have been proposed before. \citep{NEURIPS2019_26cd8eca, desai2021port} learn a Hamiltonian function using a neural network. This requires numerical integration and training datasets with points that are sufficiently close to each other to estimate derivatives. \cite{jin2020symplectic, burby2020fast} develop algorithms for learning symplectic maps (including time-shifts by the flow of Hamiltonian systems). In either approach, time-reversibility was not yet taken into account.

\section{Reversible symplectic neural networks}
In this section we introduce reversible and reversible symplectic neural networks, which we subsequently show to be universal approximators for
reversible and reversible symplectic dynamical systems.
\subsection{Reversible neural networks}\label{sec:revnn}
Consider the target space of $R$-reversible diffeomorphisms with an involutary
symmetry $R$ (i.e., $R \circ R = id_{\R^{d}}$):
\begin{equation}
\mathbf{H} = \left\{ \T \in \mathrm{Diff}(\R^d) \quad | \quad  \T = R \circ \T^{-1} \circ R \right\}.
\end{equation}
We aim to find a family of hypothesis spaces $\{\mathbf{H}_\ell\}_{\ell\in\mathbb{N}}$ of $R$-reversible maps (i.e.~$\mathbf{H}_\ell\subset\mathbf{H}$ for all $\ell\in\mathbb{N}$) such that $\lim_{\ell \to\infty} \mathbf{H}_{\ell} = \mathbf{H}$. Furthermore, the functions in $\mathbf{H}_\ell$ are to be parameterised in such a way that we can use common 
gradient-descent based optimization algorithms to find the best approximation to the unknown target function $\T\in \mathbf{H}$. We construct ${\mathbf{H}_\ell}$ using compositions of neural network-like maps. 

Note that, in general, compositions of $R$-reversible maps are not $R$-reversible. However, given any set of invertible maps $\{f_{n}\}_{n=1}^\ell$ and letting
 $\hat{f}_{i} := R \circ f_{i}^{-1} \circ R$, it is readily verified that the following composition is $R$-reversible
\begin{equation}\label{RevComp1}
F:=\hat{f}_{1} \circ \hat{f}_{2} \circ \dots \circ \hat{f}_{\ell} \circ f_{\ell} \circ f_{\ell-1} \circ \dots \circ f_{1}.
\end{equation}

We propose $\mathbf{H}_\ell$ to consist of functions of the form \eqref{RevComp1}, where the functions $\{f_n\}_{n=1}^\ell$ are Real NVP bijective layers \citep{dinh2016density} which have the important useful property that they are exactly (and easily) invertible.
Real NPV bijective layers are defined as follows:
\begin{dfn}
Given a $d$-dimensional input $x$, two functions $s,t : \R^{d'} \to \R^{d-d'}$, with $d'<d$, the output function $y_{1:d} = y(x_{1:d})$ of a Real NVP is
\begin{equation}\label{Bijector}
\left\{\begin{split}
&y_{1:d'} = x_{1:d'} \\
&y_{d'+1:d} = x_{d'+1:d} \odot \text{exp}\left( s(x_{1:d'}) \right) + t(x_{1:d'}),
\end{split}\right.
\end{equation}
where $\odot$ is the element-wise product, and $y_{1:d} : = (y_{1}, y_{2}, \dots, y_{d})$.
\end{dfn}
\subsection{Symplectic reversible neural networks}\label{sec:revsympnn}
In analogy to Section~\ref{sec:revnn}, we consider $R$-reversible symplectic diffeomorphisms on a $\mathbb{R}^{2n}$ 
as the target space:
\begin{equation}\label{HypothesisRevSymp}
\tilde{\mathbf{H}}
= \left\{ \T \in \mathrm{Diff}(\mathbb{R}^{2n})~|~ \T = R \circ \T^{-1} \circ R ~ \mbox{ and } ~ \left( D_x  \T \right)^{T} J \left( D_{x}  \T \right) = J,~\forall x \in  \mathbb{R}^{2n}\right\}.
\end{equation}
We propose to construct the hypothesis spaces $\tilde{\mathbf{H}}_\ell$ 
using compositions of the form \eqref{RevComp1}, where the maps $f_i$ are polynomial H\'enon maps. Polynomial H\'enon maps are symplectic transformations on $\mathbb{R}^n\times\mathbb{R}^n$, $(x, y) \mapsto (\bar{x}, \bar{y})$, with
\begin{equation}\label{HenonMap}
\left\{
\begin{split}
&\bar{x} = y\\
&\bar{y} = -x + \nabla V(y),
\end{split}
\right.
\end{equation}
with $V : \R^{n} \to \R^{n}$ polynomial. We note here that, as required,  the resulting composition \eqref{RevComp1} is symplectic if $R$ is symplectic or anti-symplectic.


For practical implementation, it is useful that, like Real NVPs, the inverses of polynomial H\'enon maps admit convenient and straightforward analytical expressions. 


\subsection{Universal approximation}
We finally establish the universal approximation properties of the hypothesis spaces $\mathbf{H}_\ell$ and $\tilde{\mathbf{H}}_\ell$, proposed in Sections \ref{sec:revnn} and \ref{sec:revsympnn}.  Note that the composition \eqref{RevComp1} can be expressed as
\begin{equation}\label{RevDecomposition}
\T = R \circ g^{-1} \circ R \circ g,
\end{equation}
with $g:=f_{N} \circ f_{N-1} \circ \dots \circ f_{1}$. 
Let us first show that reversible maps indeed admit such a decomposition under mild assumptions. Let $R$ be a linear involution. Recall that an orientation-preserving $R$-reversible $C^2$-diffeomorphism
$\T$ of a $d$-dimensional ball into $R^d$ is \textit{smoothly isotopic to the identity}, i.e., there exists a smooth family of $R$-reversible diffeomorphisms $f: \R^{d} \times [0,1] \to \R^{d}$ such that $f(x, 0) = x$, and $f(x, 1) = \T(x)$, for all $x$. Moreover, if $\T$ is symplectic, that the diffeomorphism
in the family $f$ are also symplectic.
\begin{thm}\label{thm:dec}
Let $\T$ be an $R$-reversible diffeomorphism, with $R$ being
a linear involution. Let $\T$ be smoothly isotopic to the identity. Then there exists a diffeomorphism  $g: \R^{d} \to \R^{d}$, such that $\T = R\circ g^{-1} \circ R \circ g$.
If $\T$ is symplectic, then $g$ can be chosen symplectic.
\end{thm}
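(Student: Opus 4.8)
The plan is to reduce the claimed factorization to a conjugacy statement for involutions, and then to realize that conjugacy by integrating an explicit time-dependent vector field along the given isotopy. First I would recast the target identity. Setting $S := R\circ\T$ and using $R$-reversibility in the form $R\circ\T\circ R = \T^{-1}$ (equivalent to \eqref{DiscreteRev} because $R^2 = \mathrm{id}$), a direct computation gives $S\circ S = R\circ\T\circ R\circ\T = \T^{-1}\circ\T = \mathrm{id}$, so $S$ is an involution. Composing the desired identity $\T = R\circ g^{-1}\circ R\circ g$ on the left with $R$ turns it into $S = g^{-1}\circ R\circ g$. Hence the theorem is equivalent to the assertion that the two involutions $R$ and $S = R\circ\T$ are conjugate by a diffeomorphism $g$ (to be taken symplectic in the symplectic case).

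Next I would use the isotopy hypothesis to connect these involutions. Let $\T_s$, $s\in[0,1]$, be the given smooth family of $R$-reversible diffeomorphisms with $\T_0 = \mathrm{id}$ and $\T_1 = \T$, and put $S_s := R\circ\T_s$. The same computation as above (now using reversibility of each $\T_s$) shows every $S_s$ is an involution, with $S_0 = R$ and $S_1 = S$; thus $R$ and $S$ are joined by a smooth path of involutions. I would then build a family $g_s$ with $g_0 = \mathrm{id}$ and $g_s\circ R\circ g_s^{-1} = S_s$ by differentiating this relation in $s$. Writing $\partial_s g_s = X_s\circ g_s$ for the generator, the relation reduces to the linear equation $X_s\circ S_s - (DS_s)\,X_s = \partial_s S_s$. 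Differentiating $S_s\circ S_s = \mathrm{id}$ yields the compatibility identity $(\partial_s S_s)\circ S_s + (DS_s)(\partial_s S_s) = 0$, and with it one checks that the averaged field $X_s(y) := -\tfrac12\,(D_{S_s(y)}S_s)\,(\partial_s S_s)(y)$ solves the equation. Integrating $X_s$ over $[0,1]$ and setting $g := g_1^{-1}$ gives $g^{-1}\circ R\circ g = S_1 = R\circ\T$, i.e.\ the required decomposition.

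For the symplectic statement I would take $\T_s$ symplectic (as provided by the recalled isotopy fact) and check that $X_s$ is a symplectic vector field, so that its flow $g_s$, and hence $g$, is symplectic. With $\omega$ the constant symplectic form with matrix $J$, anti-symplecticity of $R$ and symplecticity of $\T_s$ give $S_s^{*}\omega = -\omega$ (and $=+\omega$ if $R$ is instead symplectic). Differentiating $S_s^{*}\omega = \pm\omega$ in $s$ shows that $\eta_s := (\partial_s S_s)\circ S_s$ satisfies $L_{\eta_s}\omega = 0$. Rewriting the averaged generator as $X_s = -\tfrac12\,(S_s)_{*}\eta_s$ and applying the identity $L_{(S_s)_{*}\eta_s}\omega = (S_s)_{*}\bigl(L_{\eta_s}(S_s^{*}\omega)\bigr) = \pm(S_s)_{*}(L_{\eta_s}\omega) = 0$ gives $L_{X_s}\omega = 0$, as needed.

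The principal difficulty I anticipate is analytic rather than algebraic: since the statement is formulated on a ball (or on $\R^d$) rather than a closed manifold, the nonautonomous flow of $X_s$ need not exist for all $s\in[0,1]$, and this completeness must be arranged, for instance by controlling the isotopy near the boundary or by a suitable truncation. By contrast, the algebraic core — the reduction to conjugating $R$ and $R\circ\T$ together with the closed-form generator $X_s$ — is clean, and both the verification that $X_s$ solves the conjugacy equation and that it is symplectic reduce to the short computations indicated above.
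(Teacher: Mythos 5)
Your proof is correct and takes essentially the same route as the paper: your conjugacy relation $g_s\circ R\circ g_s^{-1}=S_s$ is exactly the paper's intertwining identity $R f_s\circ g_s=g_s\circ R$, and your generator $X_s(y)=-\tfrac12\,(D_{S_s(y)}S_s)\,(\partial_s S_s)(y)$ coincides with the paper's vector field $y_s=\tfrac12\,R\,h_s\circ R$ once the differentiated reversibility identity $h_s\circ R+(D_{Rf_s(x)}f_s)\,R\,h_s\circ f_s=0$ is substituted. The completeness caveat you raise for the nonautonomous flow on a non-compact domain is legitimate, but it is equally present (and left unaddressed) in the paper's own argument.
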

\begin{proof}
Define the non-autonomous vector field $h_s$ by the rule $h_s(f(x,s))=\partial_s f(x,s)$, where $f$ is the isotopy from the condition of the theorem. It is well-known that if the isotopy is symplectic, then $h$ is Hamiltonian. Define the vector field
$$y_s=\frac{1}{2} R h_s \circ R.$$
We claim, that the inverse of the sought map $g$ is the time-$1$ map defined along the orbits of the non-autonomous vector field $y_s$.

Indeed, it is enough to show that, for all $s\in[0,1]$,
$$R f_s\circ g_s = g_s \circ R,$$
where $g_s$ is the time-$s$ map defined by the orbits of $y_s$
(we denote $f_s=f(\dot,s)$). Differentiating this with respect to $s$, we find that it is enough for the vector field $y_s$ to satisfy the following identity:
\begin{equation}\label{ggids}
R h_s \circ f_s + (R D_x f_s) y_s= y_s \circ R\circ f_s.
\end{equation}
By the reversibility,
$$f_s \circ R \circ f_s = R.$$
Differentiating this equality with respect to $s$, we find
$$h_s\circ R +(D_{R f_s(x)}f_s) R h_s\circ f_s =0,$$
and (\ref{ggids}) indeed follows.

By construction, if $h_s$ is Hamiltonian, then $y_s$ is Hamiltonian too, so $g$ is symplectic, as required.
\end{proof}

Theorem~\ref{thm:dec} implies that the problem of approximating $\T$ with arbitrary accuracy is reduced to the problem of approximating $g$ with arbitrary accuracy. In the general (non-symplectic) reversible case, we can approximate any diffeomorphism $g$ by Real NVP bijective layers in $L^p$
according to the result of \citep{teshima2020coupling}. A stronger, $C^\infty$-approximation result, follows from \citep{Tu2015}. When $g$ is symplectic, it can be 
$C^\infty$-approximated by compositions of polynomial H\'enon maps according to 
\citep{turaev2002polynomial}.

\section{Numerical experiments}
Finally, we present some results concerning the approximation of Poincar\'e return maps of time-reversible Hamiltonian systems. We consider a periodically driven pendulum and the H\'enon-Heiles system. In each of these examples, the dynamical systems are defined as solutions of explictly given differential equations. However, the Poincar\'e maps that we aim to learn do not admit explicit expressions that can be derived directly from the differential equations. 
Hence, it is a natural objective to learn such maps using a neural network.

\subsection{The H\'enon-Heiles system}
Consider a particle in $\mathbb{R}^2$, with phase space coordinates $(x,y,p_{x},p_{y})\in\R^{4}$, whose equations of motion are generated by the Hamiltonian
\begin{equation}\label{HenonHeilesHam}
\Ha(x,y,p_{x},p_{y}) = \frac{1}{2}(p_{x}^{2} + p_{y}^{2} + x^{2} + y^{2}) + \lambda(x^{2}y - \frac{y^{3}}{3}).
\end{equation}
This dynamical system is known as the H\'enon-Heiles System, after Michel H\'enon and Carl Heiles who introduced it in their celebrated 1964 paper \citep{henon1964applicability}, as a model for the motion of a star in an axisymmetric potential. For a given value of the energy $E\in\R$ the orbits are constrained to 3-dimensional level sets of the Hamiltonian $\mathcal{E} = \Ha ^{-1}(E)$. 

The aim is to study the dynamics of the H\'enon-Heiles system using a so-called Poincar\'e return map.
A Poincar\'e section $\mathcal{P}$ is a $(2n-2)$-dimensional submanifold of $\mathcal{E}$ that is everywhere transversal to the flow. 
Let $\phi_{t}$ be the flow of the Hamiltonian system and $\mathcal{P}$ a Poincar\'e section. Then the Poincar\'e return map $\T : \mathcal{P} \longrightarrow \T(\mathcal{P}) \subseteq \mathcal{P}$ is defined as
\begin{equation*}
\T(x) = \phi_{t_{0}}(x),
\end{equation*}
with $t_{0}>0$ least so that $\phi_{t_0}(x) \in \mathcal{P}$. If the section is smooth then  $\T : \mathcal{P} \to\T(\mathcal{P})$ is a 
symplectic diffeomorphism. For more details on Poincar\'e return maps see \citep{BROER20101}.

It is readily verified that for the H\'enon-Heiles system
\begin{equation}\label{Psec}
\mathcal{P} = \{(x, y, p_{x}, p_{y}) \in \mathcal{E} \text{  } \rvert \text{  } x=0, p_{x}>0\}
\end{equation}
defines a Poincar\'e section on $\mathcal{E}$. In Fig.~\ref{PoincareHenonHeiles}(a-c) we present some \textit{Poincar\'e plots} (portraits containing selected orbits of initial conditions under the Poincar\'e map, illustrating the dynamics) for different values of the energy $E$. 
\begin{figure*}[h]
\centering
\subfigure[]{\includegraphics[width=0.3\linewidth]{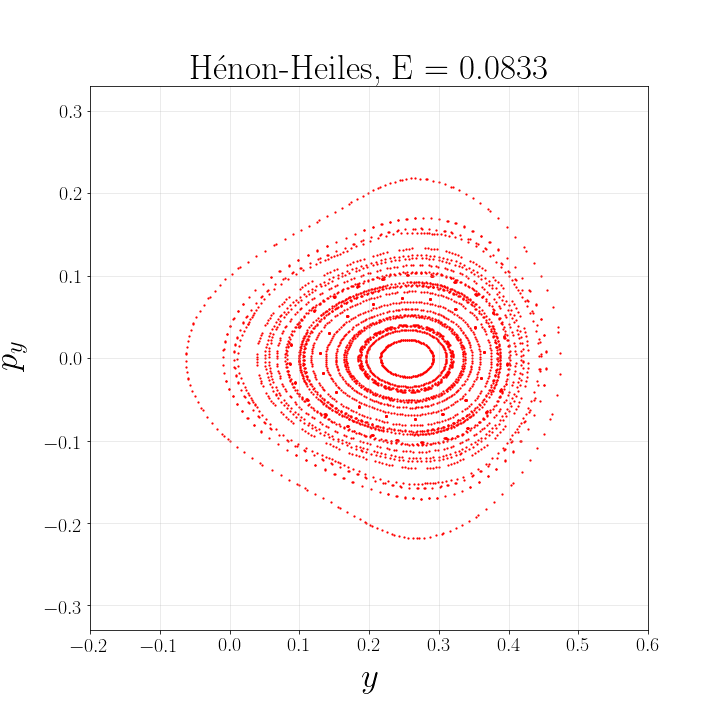}}
\subfigure[]{\includegraphics[width=0.3\linewidth]{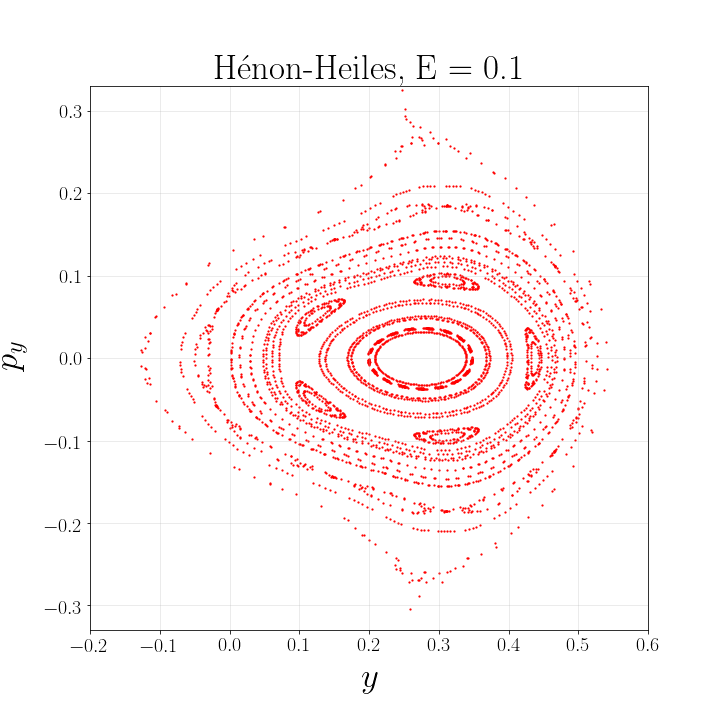}}
\subfigure[]{\includegraphics[width=0.3\linewidth]{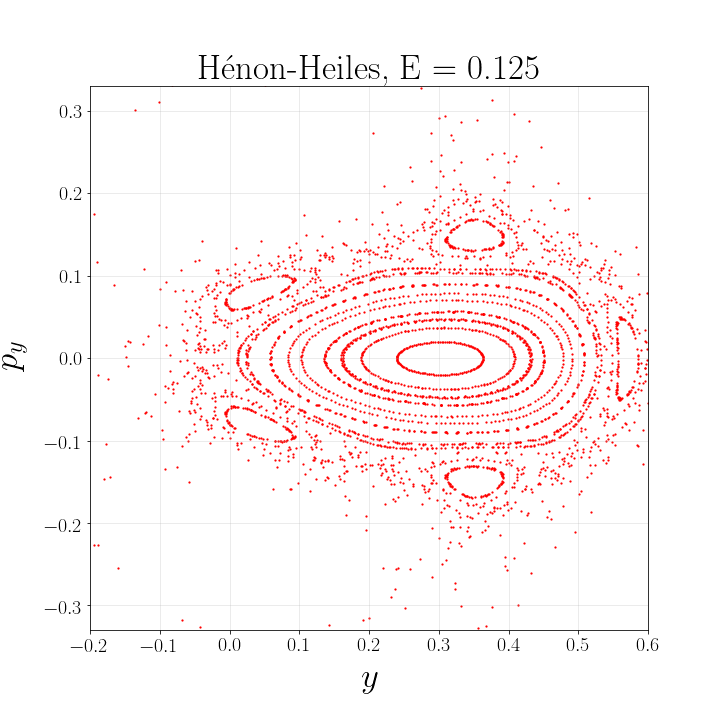}}\\
\subfigure[]{\includegraphics[width=0.3\linewidth]{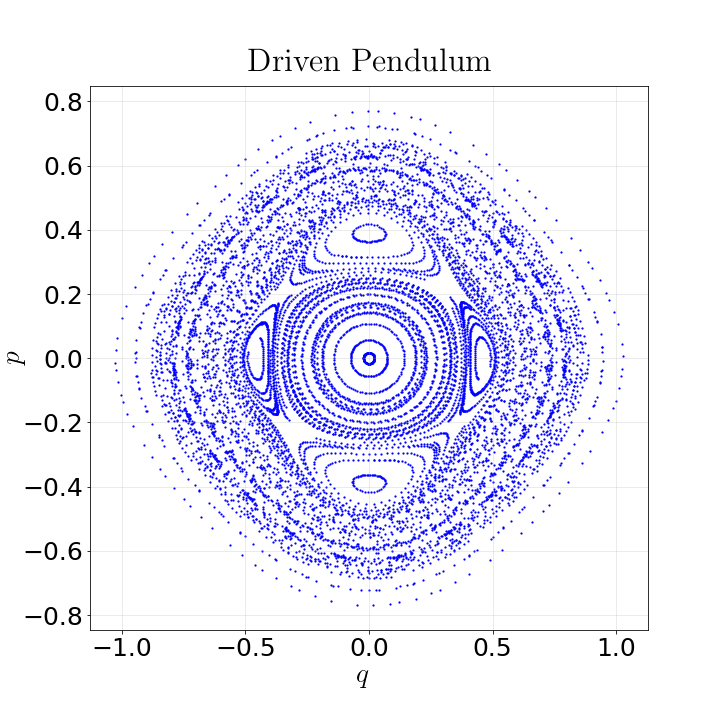}}
\caption{\textit{(a), (b), (c)} Poincar\'e plots of the H\'enon-Heiles system for different values of the energy $E$, and surface of section \eqref{Psec}. 
\textit{(d)} Poincar\'e plot of the periodically driven pendulum.}
\label{PoincareHenonHeiles}
\end{figure*}
The H\'enon-Heiles Hamiltonian \eqref{HenonHeilesHam} satisfies the property that $\Ha(x,y,p_x,p_y)=\Ha(x,y,-p_x,-p_y)$ which yields its dynamics reversible with respect to the reversing symmetry $R(x,y,p_x,p_y)=(x,y,-p_x,-p_y)$. As a consequence, it follows that the Poincar\'e return map $\T$ on $\mathcal{P}$ is also reversible, with respect to the involution $R_\mathcal{P}:(y,p_y)\to(y,-p_y)$. This time-reversal symmetry gives rise to the reflection symmetry $R_\mathcal{P}$  in the Poincar\'e plots in 
Fig.~\ref{PoincareHenonHeiles}(a-c). 

Poincar\'e maps provide important information about the dynamics. However, extensive studies of such maps obtained directly from the differential equations by numerical integration is often impractical due to the expense of the computations involved, tracking trajectories in the ambient space between hitting the surface of section. Training a neural network to learn a Poincar\'e map may be computationally expensive as well, but once trained it is fast to iterate. 

\subsection{Periodically driven pendulum}
Consider the time-periodic Hamiltonian
\begin{equation}\label{PerturbedPendulum}
\Ha(p, q, t) = \frac{1}{2}p^{2} - \nu ^{2}cos(q) - \lambda\left[0.3pq \sin(2t) + 0.7pq\sin(3t)\right].
\end{equation}
This Hamiltonian represents a simple pendulum with natural frequency $\nu$ driven by a $2\pi$-periodic force. The time-$2\pi$ map of this system defined by the flow $\Phi_{2\pi}$ is usually also referred to as a Poincar\'e map as one could view time as an augmented phase space coordinate and  $t=2\pi$ can be considered a surface of section in the extended phase space $(p, q, t)$. A Poincar\'e plot of this map is depicted in Fig.~\ref{PoincareHenonHeiles}(d).

It can be shown that the time-$2\pi$ map $\Phi_{2\pi}$ is reversible symplectic with reversing symmetry $R(q,p)=(q,-p)$, which is also reflected by the observed symmetry in the Poincar\'e plot in Fig.~\ref{PoincareHenonHeiles}(d). 

\subsection{Dataset and hypothesis spaces for experiments}
For both systems the dataset is of form $\{(x_{i}, y_{i}), \mathcal{T}(x_{i}, y_{i}) \}_{i=1}^{N}$ where $\T$ is the Poincar\'e map obtained via numerical integration. For both the pendulum and the H\'enon-Heiles system, the optimizer has a small number of points available, i.e., $N=100$ and $N=300$ respectively.

We include the most significant experiments for three different physics-informed hypothesis spaces and the hypothesis space of plain MLPs:
\begin{itemize}
    \item $\mathbf{H}_{NN}$: the set of MLPs, with depth $6$, and a total of $\approx 5000$ trainable parameters.
    \item $\mathbf{H}_{R}$: A function space $\mathbf{H}_{\ell}$ of $R_\mathcal{P}$-reversible diffeomorphisms with 6 different, appropriately masked, Real NVP bijective layers of form \eqref{Bijector} in which the functions $s$ and $t$ are represented by shallow MLPs.
    \item $\mathbf{H}_{HR}$: A function space $\tilde{\mathbf{H}}_{\ell}$ of  $R$-reversible compositions of polynomial H\'enon maps of degree 4, with depth $25$.
    \item $\mathbf{H}_{SN}$: The set of symplectic neural networks (SympNets \citep{jin2020symplectic}) of depth $18$, with $8$ linear modules for each activation module.
\end{itemize}
The hyperparameters of these hypothesis spaces have been chosen so that they all have comparable numbers of trainable parameters and training times. All the experiments are implemented using Keras \citep{chollet2015keras} and have been run on a single Nvidia RTX 2080. The source code can be found agithub.com/Ricvalp/SymplecticTimeReversibleNN. The models have been trained with Adam optimizer and a decaying learning rate; unless stated otherwise, the dataset has been divided into training/validation sets with a 9:1 ratio. Figure \ref{Loss} reports the losses during training.
\subsection{Results}
To compare performances we have evaluated the trained evolution maps in the following way. First, we compare the resulting Poincar\'e plots. From these figures it is clear that evolution maps from intrinsically time-reversible hypothesis spaces result in Poincar\'e plots that look very similar to the ground truth obtained using numerical integration (Fig. \ref{PoincareHenonHeiles}). We then calculate the average Euclidean norm between a few ground truth trajectories and the predictions. Poincar\'e plots and errors are shown in Fig. \ref{PoincarePlots} for the driven pendulum, and in Fig. \ref{PoincarePlots1} for the H\'enon-Heiles system.
\begin{figure}[h]
\centering
\subfigure[]{\includegraphics[width=0.49\linewidth]{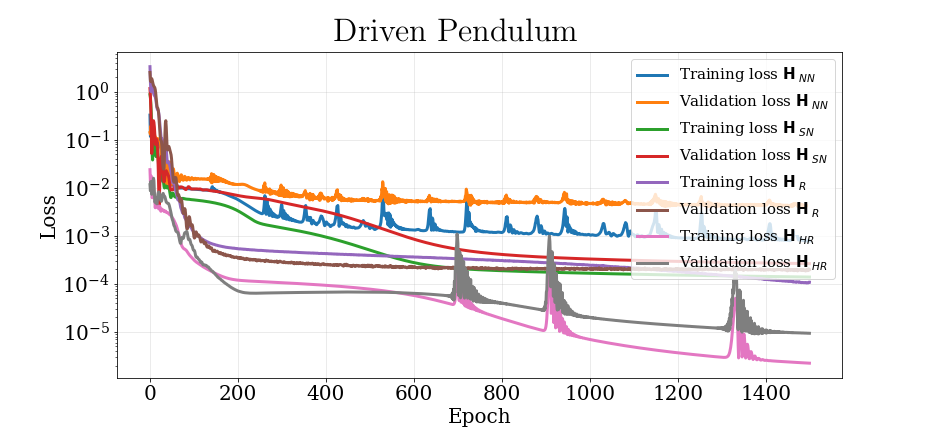}}
\centering
\subfigure[]{\includegraphics[width=0.49\linewidth]{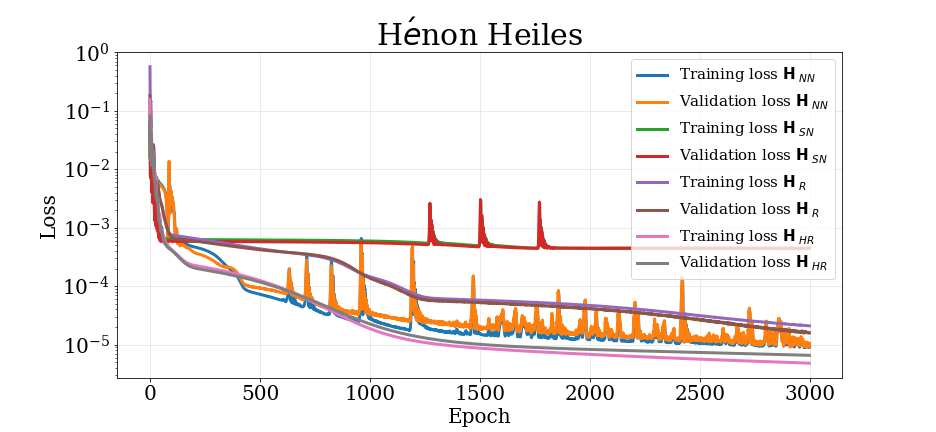}}
\caption{Training and validation loss versus epoch for the four hypothesis spaces and the two systems.}
\label{Loss}
\end{figure}

\begin{figure}[h]
\centering
\subfigure[]{\includegraphics[width=0.31\linewidth]{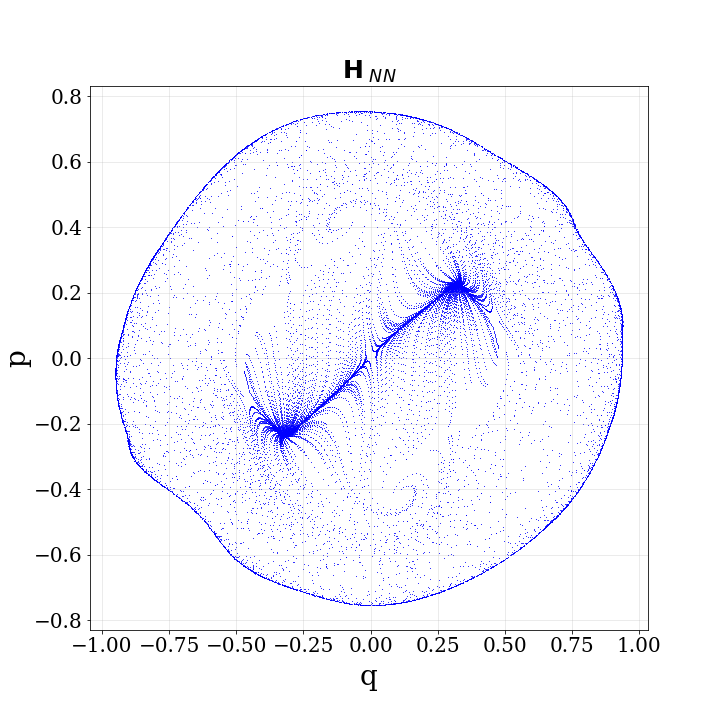}} 
\subfigure[]{\includegraphics[width=0.31\linewidth]{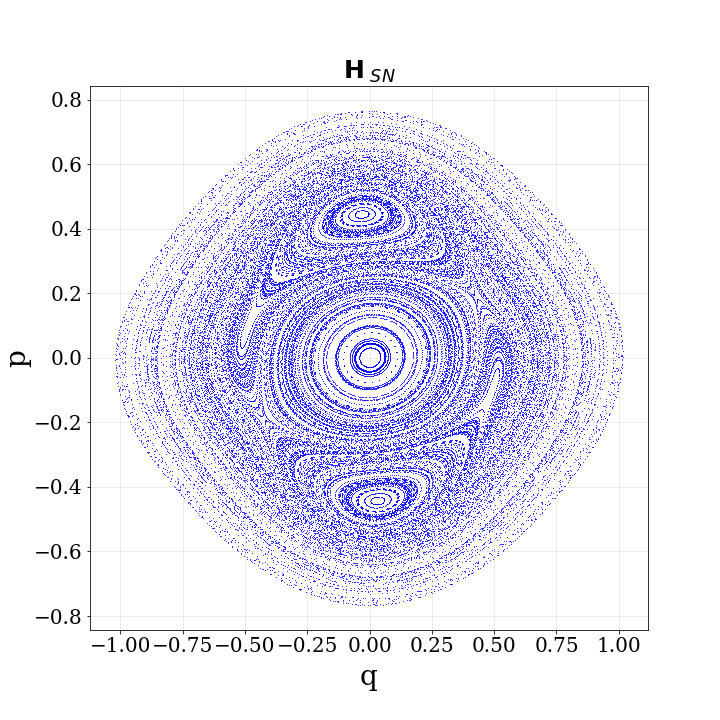}}
\subfigure[]{\includegraphics[width=0.31\linewidth]{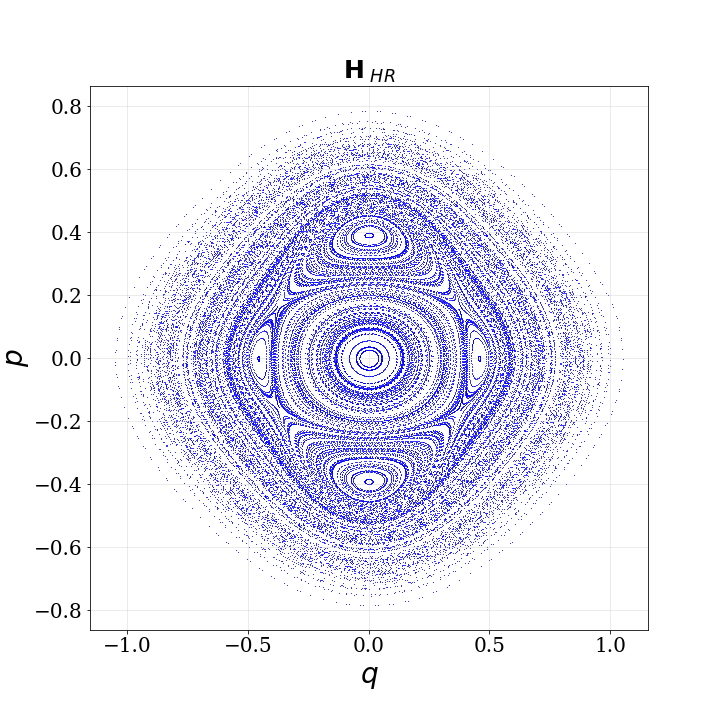}}\\
\subfigure[]{\includegraphics[width=0.31\linewidth]{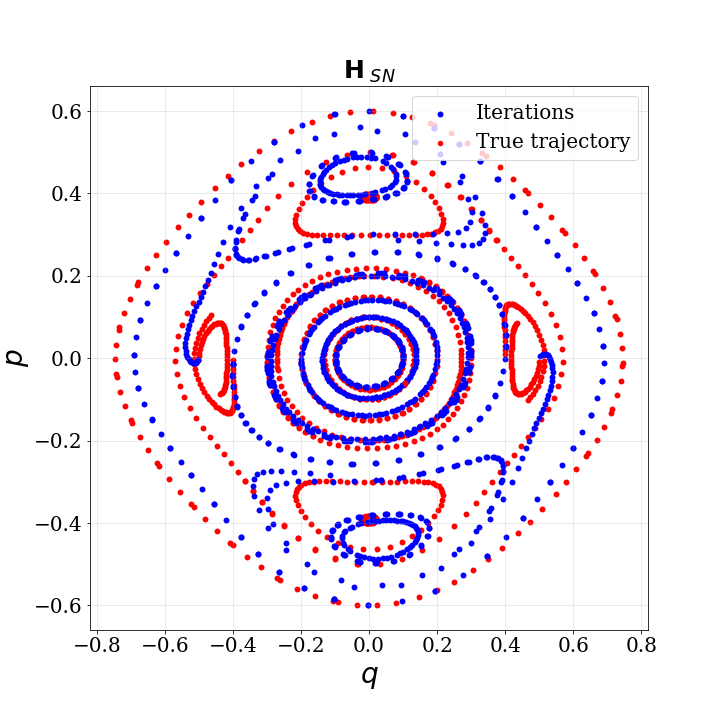}} 
\subfigure[]{\includegraphics[width=0.31\linewidth]{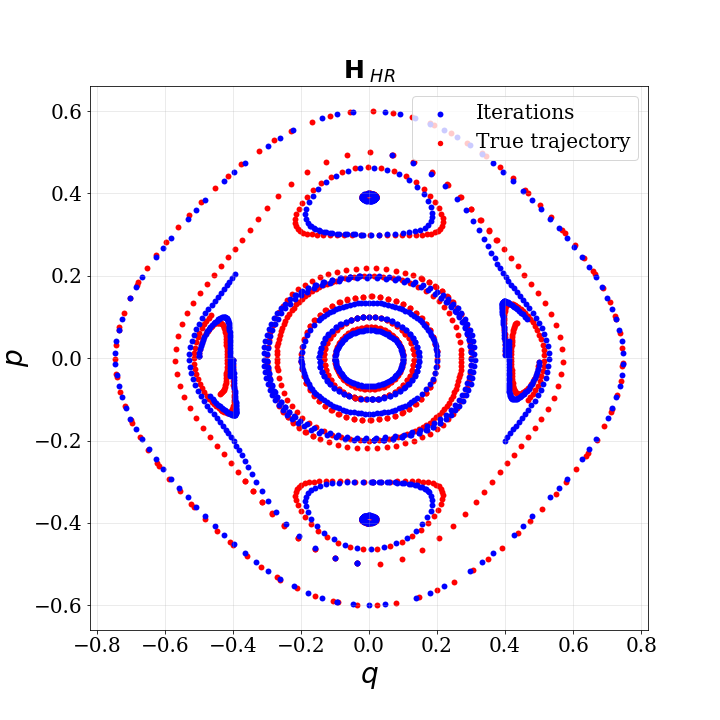}}
\subfigure[]{\includegraphics[width=0.31\linewidth]{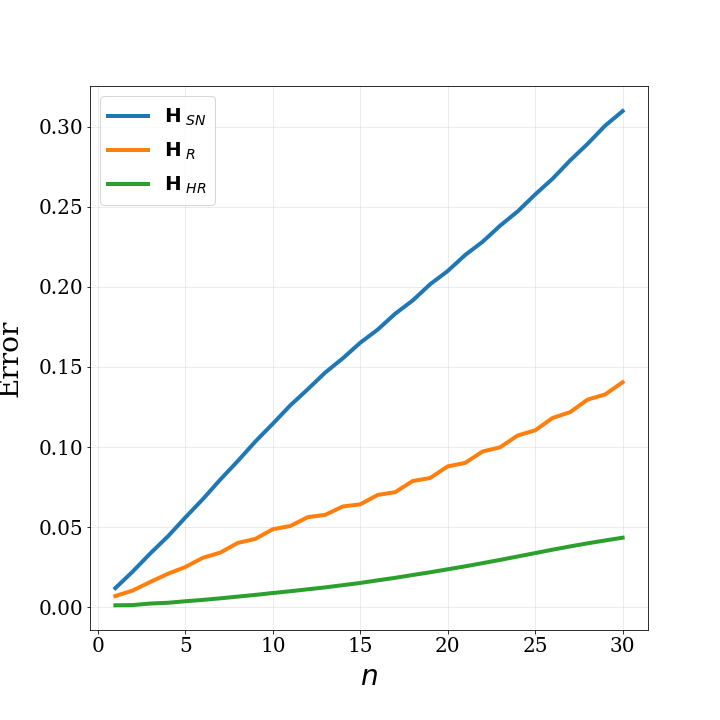}}
\caption{\textbf{Periodically driven pendulum}. Top: Poincar\'e plots obtained iterating the trained evolution maps from $\mathbf{H}_{NN}$,  $\mathbf{H}_{SN}$, and $\mathbf{H}_{HR}$. Bottom: a few predicted trajectories and the average prediction error, calculated as the average euclidean norm of the difference vs. the iteration $n$.}
\label{PoincarePlots}
\end{figure}
\begin{figure}[h]
\centering
\subfigure[]{\includegraphics[width=0.31\linewidth]{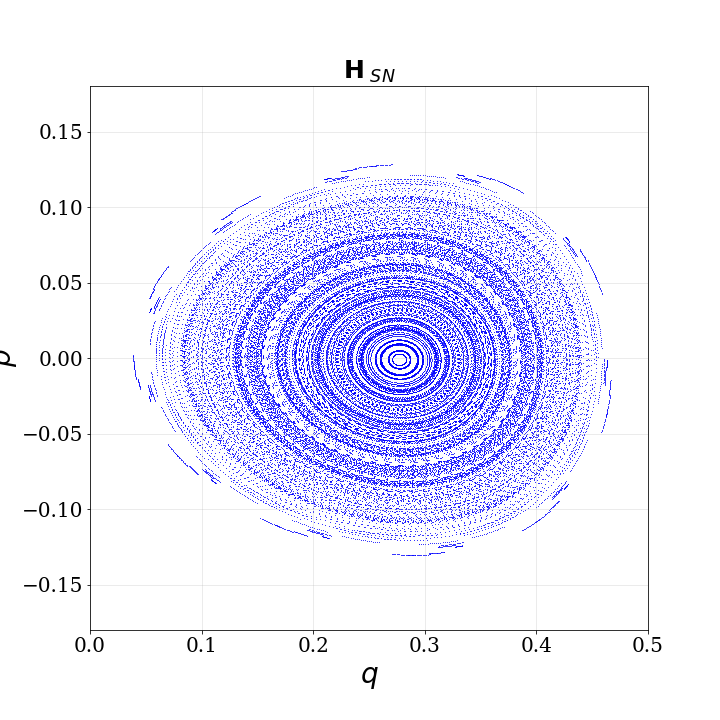}} 
\subfigure[]{\includegraphics[width=0.31\linewidth]{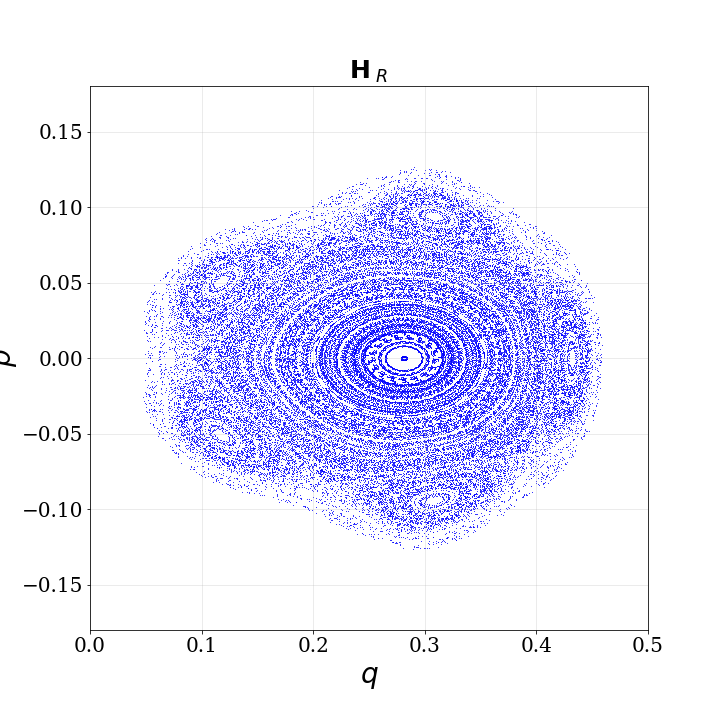}}
\subfigure[]{\includegraphics[width=0.31\linewidth]{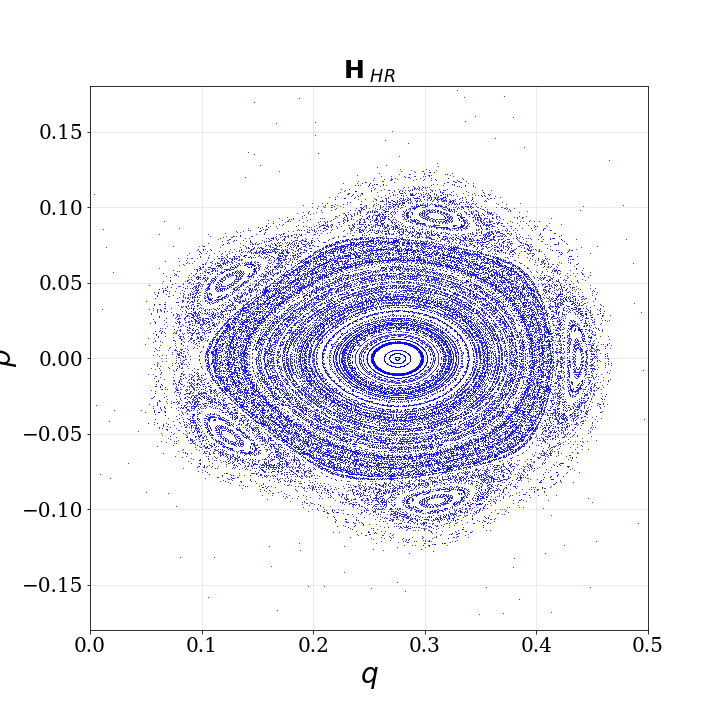}}
\subfigure[]{\includegraphics[width=0.31\linewidth]{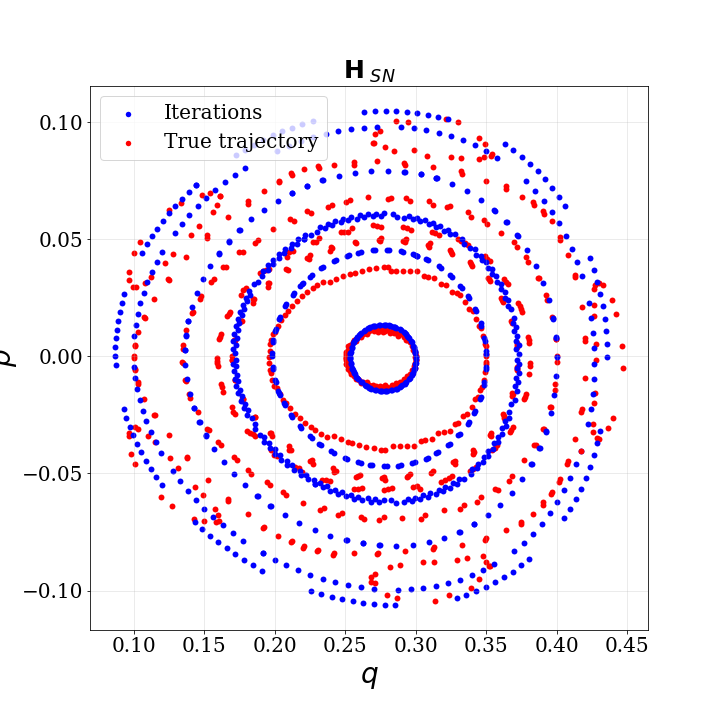}} 
\subfigure[]{\includegraphics[width=0.31\linewidth]{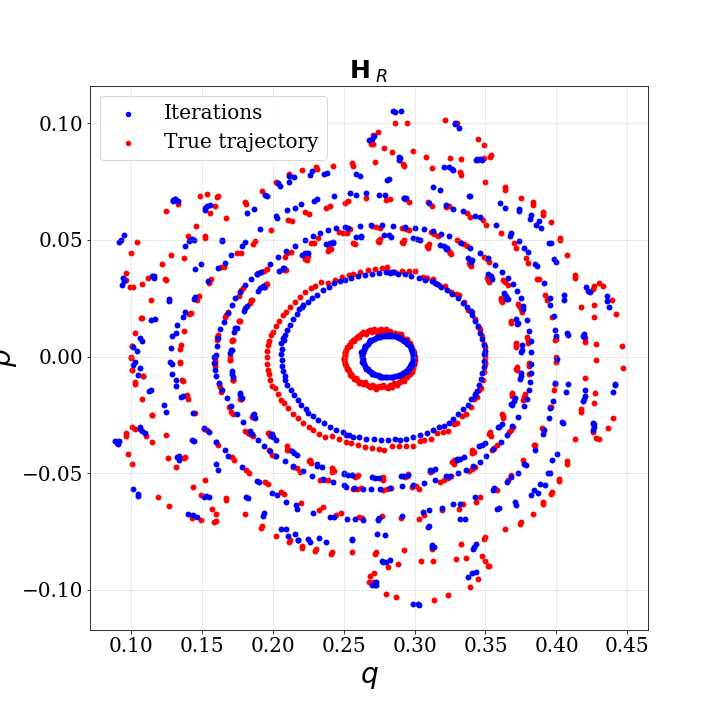}}
\subfigure[]{\includegraphics[width=0.31\linewidth]{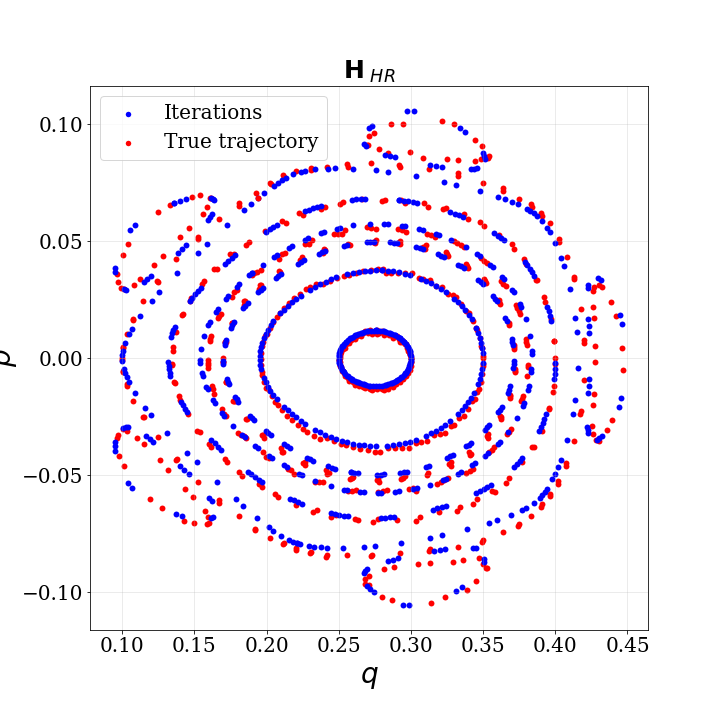}}\\
\caption{\textbf{H\'enon-Heiles system}. Top: Poincar\'e plots obtained iterating the trained evolution maps from $\mathbf{H}_{SN}$,  $\mathbf{H}_{R}$, and $\mathbf{H}_{HR}$. Bottom: a few predicted trajectories.}
\label{PoincarePlots1}
\end{figure}


\section{Conclusion}
We have addressed the problem of learning the evolution map of a discrete-time dynamical system that is known to be either reversible or reversible symplectic. Many Hamiltonian systems (and their Poincar\'e maps) are reversible symplectic. We have proposed relevant hypothesis spaces with these structures in terms of trainable neural networks, extending the results of \cite{jin2020symplectic} and \cite{burby2020fast} to symplectic systems that are also reversible. We observed in numerical experiments that reversible symplectic networks perform better than existing symplectic ones, like SympNets. Also, even reversible networks without symplectic structure, involving non-symplectic Real NVP bijective layers, perform remarkably well on reversible symplectic systems.

\newpage
\bibliography{Biblio}
\end{document}